\documentclass[]{IEEEtran}

\IEEEoverridecommandlockouts                         

\usepackage{graphics} 
\usepackage{iftex}
\ifPDFTeX
  \usepackage{times}
\else
  \usepackage{fontspec}
\fi
\usepackage{amsmath} 
\usepackage{amssymb}  
\usepackage{amsthm}
\usepackage{hyperref}
\hypersetup{pdftitle={nanoflann: A Header-Only KD-Tree Library for Exact
  Nearest-Neighbor Search on Compile-Time Product Manifolds},
  pdfauthor={Jose Luis Blanco-Claraco}}

\newtheorem{lemma}{Lemma}

\newcommand{\fyes}{\checkmark}
\newcommand{\fno}{--}
\newcommand{\fpart}{$\sim$}

\usepackage{import}
\usepackage{pgf}
\usepackage{tikz}
\usetikzlibrary{arrows.meta}

\usepackage{color}
\usepackage{xcolor}
\usepackage{listings}
\usepackage{caption}
\usepackage{subcaption}
\usepackage{algpseudocode}

\newcounter{nalg}
\renewcommand{\thenalg}{\arabic{nalg}} 
\DeclareCaptionLabelFormat{algocaption}{Algorithm \thenalg} 

\lstnewenvironment{algorithm}[1][] 
{   
    \refstepcounter{nalg} 
    \captionsetup{labelformat=algocaption,labelsep=colon} 
    \lstset{ 
		comment=[l]{\#},
        mathescape=true,
        frame=tB,
        numbers=left, 
        numberstyle=\tiny,
        basicstyle=\scriptsize, 
        commentstyle=\color{red},
        keywordstyle=\color{black}\bfseries\em,
        keywords={,input, output, return, datatype, function, in, if, else, foreach, while, begin, end, function, }
        numbers=left,
        xleftmargin=.04\textwidth,
        #1 
    }
}
{}

\title{\LARGE \bf
 nanoflann: A Header-Only KD-Tree Library for Exact Nearest-Neighbor Search on
 Compile-Time Product Manifolds
}

\author{Jose~Luis~Blanco-Claraco%
\thanks{J.L. Blanco-Claraco is with the Department of Engineering, CIESOL,
CeiA3. University of Almer\'{\i}a, 04120, Almer\'{\i}a, Spain
(e-mail: \mbox{jlblanco@ual.es}).}%
}

\begin{document}
\bstctlcite{IEEEexample:BSTcontrol}

\maketitle
\begin{tikzpicture}[remember picture,overlay]
\node[anchor=south,yshift=0.1in,text width=\textwidth,align=justify,
      font=\scriptsize] at (current page.south) {\copyright~2026 IEEE.
Personal use of this material is permitted. Permission from IEEE must be
obtained for all other uses, in any current or future media, including
reprinting/republishing this material for advertising or promotional purposes,
creating new collective works, for resale or redistribution to servers or
lists, or reuse of any copyrighted component of this work in other works.
Accepted for publication in IEEE Robotics and Automation Letters.
DOI: to be assigned};
\end{tikzpicture}
\thispagestyle{empty}
\pagestyle{empty}

\begin{abstract}
Nearest-neighbor (NN) search underlies countless robotics algorithms, but many
state spaces are \emph{not} Euclidean: orientations live on $SO(2)$ or $SO(3)$
and poses on $SE(2)$ or $SE(3)$, and a Euclidean KD-tree on their raw
coordinates returns \emph{wrong} neighbors. We present a header-only, dependency-free C++ KD-tree library for \emph{exact}
NN search on user-defined product manifolds declared at compile time: the space
is composed from base manifolds ($\mathbb{R}^n$, $SO(2)$, $SO(3)$, $S^n$;
products $SE(2)$, $SE(3)$, tori) with a single type alias, and the library
synthesizes a metric whose pruning is provably admissible, wrap-aware for
circular coordinates and double-cover-aware for unit quaternions. Experiments confirm exactness (recall $1.0$ vs an exhaustive scan
under the same product metric), where the Euclidean-on-raw shortcut collapses to recall
$\approx 0.5^{m}$ for $m$ rotation blocks, with up to three orders of magnitude
speedup over brute force. We further contribute an \emph{incremental, self-balancing single-tree index}
for points under heavy churn (e.g.\ sliding-window LiDAR maps) or $SE(3)$
trajectories: bounded memory and faster updates than prior work, with an
optional asynchronous tail-latency-bounding mode.
The implementation is open source: \url{https://github.com/jlblancoc/nanoflann}.
\end{abstract}

\section{INTRODUCTION}
\label{sec:intro}

Nearest-neighbor (NN) search is a foundational primitive in mobile robotics: it
underlies the data association of point-cloud registration and LiDAR odometry,
the nearest-keyframe and loop-closure lookups of SLAM, and the steering queries
of sampling-based motion planners. Issued millions of times per second, these
queries are almost always answered by a $k$-dimensional (KD)
tree~\cite{bentley1975kdtree,friedman1977kdtree,samet2006foundations}, which is
frequently the performance bottleneck of the whole system.

A recurring but under-served need is that many robotics quantities do \emph{not}
live in a Euclidean space: a heading is a point on the circle $S^1\cong SO(2)$,
a 3D orientation an element of $SO(3)$, a rigid-body pose lives on $SE(2)$ or
$SE(3)$. The shortest distance between two such elements is a geodesic, and the
spaces wrap around (the angles $+\pi$ and $-\pi$ coincide) or double-cover (the
unit quaternions $\mathbf{q}$ and $-\mathbf{q}$ denote the same rotation). Feeding their raw
coordinates to a Euclidean KD-tree therefore returns \emph{wrong} neighbors:
the metric is wrong and, what is worse, the tree's branch-and-bound pruning
silently discards true neighbors that lie across a wrap-around seam.

Adapting KD-trees to such topologies is itself not new: Yershova and
LaValle~\cite{yershova2007mpnn} showed that an axis-aligned tree can answer
\emph{exact} geodesic NN on products of $\mathbb{R}$, $S^1$, and $\mathbb{R}P^3$
by making the point-to-region bound topology-aware, and later work refined this
for $SO(3)$/$SE(3)$~\cite{ichnowski2015se3,ichnowski2020concurrent}. These
results live in planning-specific libraries that hard-code a fixed set of spaces
and store topological state per node. Our goal is different: to bring exact
manifold NN to a mainstream, general-purpose KD-tree as a \emph{drop-in}
capability. The user declares the search space \emph{topology} at compile time,
as a product of arbitrary base manifolds, and we
synthesize from that declaration a distance functor whose pruning bound is
\emph{admissible} for each coordinate's topology, i.e.\ it never
over-estimates the distance from the query to a subtree's region, and therefore
never prunes a subtree that could hold a true neighbor (formalized and proved in
Lemma~\ref{lem:adm}). Admissibility is what makes branch-and-bound
exact, so the search returns the same neighbors as an exhaustive scan
under the same metric.
Table~\ref{tab:features} positions this work against the existing NN libraries and
methods, including those without a known public reference implementation.

\begin{table*}[t]
\centering
\caption{Feature comparison of NN libraries and methods (\fyes~yes,
\fpart~partial, \fno~no). \emph{Non-Eucl.\ spaces:} ``any'' = arbitrary metric
as a black box; ``fixed'' = a hard-coded factor set chosen at run time
(MPNN: $\mathbb{R}$, $S^1$, $\mathbb{R}P^3$; nigh: $\mathbb{R}^n$, $SO(2)$,
$SO(3)$, $SE(3)$, weighted Cartesian products); ``general'' =
arbitrarily nested compile-time products of base manifolds, adding
$S^n$ ($n\ge2$). The proposed library supports the
manifold metric in both the static and the incremental single-tree index.}
\label{tab:features}
\scriptsize
\renewcommand{\arraystretch}{1.0}
\begin{tabular}{l c c c c c}
\hline
Method & Exact & Non-Eucl.\ spaces & Header-only & Dep.-free & Dynamic\\
\hline
\multicolumn{6}{l}{\textit{Euclidean KD-tree / NN libraries}}\\
FLANN~\cite{muja2009flann} & approx/exact & none & \fno & \fno & \fno\\
ANN~\cite{arya1998optimal} & exact$+\epsilon$ & none & \fno & \fno & \fno\\
libnabo~\cite{elseberg2012libnabo}, PCL~\cite{rusu2011pcl} & \fyes & none & \fno & \fno & \fno\\
pico\_tree~\cite{picotree}, libkdtree++ & \fyes & none & \fyes & \fyes & \fno\\
ikd-Tree~\cite{cai2021ikdtree} & \fyes & none & \fyes & \fpart & \fyes\\
\hline
\multicolumn{6}{l}{\textit{General-metric (manifold as a black box) and approximate}}\\
VP-tree~\cite{yianilos1993vptree} / cover tree~\cite{beygelzimer2006cover} & \fyes & any & \fno & \fno & \fpart\\
HNSW~\cite{malkov2018hnsw} & approx & any & \fyes & \fyes & \fyes\\
\hline
\multicolumn{6}{l}{\textit{Exact manifold KD-trees}}\\
MPNN~\cite{yershova2007mpnn} & \fyes & fixed & \fno & \fno & \fyes\\
Ichnowski'15~\cite{ichnowski2015se3} & \fyes & fixed & \fpart & \fno & \fyes\\
nigh~\cite{ichnowski2020concurrent} & \fyes & fixed & \fyes & \fno & \fyes\\
\textbf{proposed} & \fyes & \textbf{general} & \fyes & \fyes & \fyes\\
\hline
\end{tabular}
\end{table*}

The contributions of this work are:
\begin{itemize}
\item A \emph{compile-time, composable} product-manifold abstraction for
  KD-trees: base spaces ($\mathbb{R}^n$, $SO(2)$, $SO(3)$, the unit sphere
  $S^n$) and arbitrary products ($SE(2)$, $SE(3)$, $n$-tori, $SO(3)^k$,
  $\mathbb{R}^3\times S^2$ for oriented points, $\ldots$) declared with a
  single C++ type alias, generalizing the fixed space sets of prior work.
\item A \emph{coordinate-separable} admissible bound (wrap-aware for circular
  coordinates, double-cover-aware for quaternion blocks) that keeps the
  manifold tree's node layout and memory \emph{identical} to the ordinary
  KD-tree, unlike prior schemes that store per-node box projections. It
  suffices for all linear and circular coordinates ($\mathbb{R}^n$, $SO(2)$,
  $S^n$, $SE(2)$, tori) and enables the $O(1)$ per-level update. For quaternion
  blocks it is too loose, and we reuse MPNN's block-level chordal
  bound~\cite{yershova2007mpnn} \emph{(not our contribution)}; ours is its
  \emph{storage-free realization}, reconstructing the block's intervals from the
  split scalars the tree already stores. Exactness is proved for any weighted
  product metric with a per-block-monotone combination (Lemma~\ref{lem:adm}).
\item The adapted query algorithms (Sec.~\ref{sec:manifold}), including a
  region-carrying exact search that \emph{reconstructs} child intervals from the
  stored split values, handling the circle's $\pm\pi$ seam with no extra
  per-node storage.
\item An open-source implementation, an evaluation of exactness and of
  \emph{when} manifold KD-trees beat brute force, and a loop-closure case study
  on real trajectories where the common workarounds lose precisely the
  loop-closure candidates.
\item An \emph{incremental self-balancing single-tree index}
  (Sec.~\ref{sec:incremental}) for point sets under heavy churn: a from-scratch
  re-implementation of the published ikd-Tree algorithms~\cite{cai2021ikdtree}
  with weight-balanced partial rebuilds, box-trim deletion, bounded memory and
  an optional asynchronous rebalancing thread; its per-subtree boxes also supply
  the region supersets making it exact on product manifolds under churn.
\end{itemize}

\noindent Composing metrics across a product is \emph{not} itself new (MPNN and nigh
do it); the novelty is in the storage-free bound above, the extra base spaces
($S^n$, hence $\mathbb{R}^3\times S^2$), and in delivering both inside a
general-purpose KD-tree. An alternative Bentley--Saxe forest~\cite{bentley_decomposable_1980} implementation is also provided for comparison, covering Euclidean-only spaces.

\section{RELATED WORK}
\label{sec:related}

\paragraph*{KD-tree and NN libraries}
The KD-tree~\cite{bentley1975kdtree,friedman1977kdtree} and its branch-and-bound
search remain the workhorse for low-dimensional exact NN~\cite{samet2006foundations}.
FLANN~\cite{muja2009flann} popularized \emph{approximate} NN for high-dimensional
data, ANN~\cite{arya1998optimal} provides exact and $\epsilon$-approximate
search with provable bounds, and in robotics libnabo~\cite{elseberg2012libnabo},
PCL~\cite{rusu2011pcl} (a thin layer over FLANN) and
Open3D~\cite{zhou2018open3d} serve point clouds. All operate on Euclidean
coordinates.
For data that changes over time, the Bentley--Saxe
transform~\cite{bentley_decomposable_1980} makes a static structure
insertion-friendly at the cost of querying $O(\log N)$ sub-trees and of
unbounded memory under deletion churn, while the
ikd-Tree~\cite{cai2021ikdtree}, an incrementally rebalanced single tree with
lazy deletion and box-wise trimming, underpins LiDAR-inertial odometry in works such as FAST-LIO2~\cite{xu2022fastlio2}; this work presents a re-implementation of its published design from scratch (Sec.~\ref{sec:incremental}),
including further optimizations, and compares it against the original in
Sec.~\ref{sec:exp-incremental}.

\paragraph*{General-metric data structures}
VP-trees~\cite{yianilos1993vptree} and cover trees~\cite{beygelzimer2006cover}
answer exact NN in \emph{any} metric space by treating the metric as a black
box, but every pruning test then needs a full metric evaluation, with no
coordinate-separable incremental bound; on the low-dimensional spaces of
robotics they are typically slower than an axis-aligned KD-tree by orders of
magnitude~\cite{yershova2007mpnn}.

\paragraph*{Exact NN on manifolds with KD-trees}
Adapting axis-aligned KD-trees to the topologies of robotics configuration
spaces originates with Yershova and LaValle~\cite{yershova2007mpnn}: their MPNN
library builds an ANN-style tree on the raw coordinates and, at query time,
uses topology-aware point-to-rectangle distances on Cartesian products of
$\mathbb{R}$, $S^1$, and $\mathbb{R}P^3$ (i.e.\ $SE(2)$, $SE(3)$, tori,
$SO(3)^k$), handling the quaternion double cover with the per-node block bound
$\min(d(p,R),d(-p,R))$ and proving exact pruning. Ichnowski and
Alterovitz~\cite{ichnowski2015se3} later gave an intrinsic $SO(3)$ scheme that
partitions $S^3$ with four KD-trees over projected quaternion faces, extended
to $SE(3)$ and to concurrent search (nigh)~\cite{ichnowski2020concurrent}; both
ship as planning-specific libraries that hard-code their space sets and store
per-node topological state. Exact manifold KD-tree NN and the composition of per-factor metrics over a
product are both established, and the block-level quaternion bound we use is due
to~\cite{yershova2007mpnn}. Three deltas remain: (i) that bound is applied with
\emph{no} per-node storage, reconstructing the block's intervals from the split
scalars the tree already keeps; (ii) the factor set is open and resolved at
compile time, adding $S^n$ ($n\ge2$) and products containing it, $n$-tori and
$SO(3)^k$; (iii) it ships in a general-purpose dependency-free KD-tree and an
incremental index, not a planning-specific library. Sampling-based
planners~\cite{lavalle2001rrt,karaman2011rrtstar} provide the motivating use
cases for non-Euclidean nearest-neighbor queries.

\section{BACKGROUND: EUCLIDEAN KD-TREE}
\label{sec:background}

We recap the standard construction and search on which the non-Euclidean
extension builds, following the proposed library's implementation.

\subsection{Construction}
$N$ points in $\mathbb{R}^D$ are indexed by a permutation array and the tree is
built top-down, splitting each node by an axis-aligned plane (an in-place
partition of that array) until a leaf holds at most \texttt{maxLeafSize} points.
The split dimension is the one of largest spread; the cut is the midpoint of the
node's box along it, clamped so neither child is empty. An interior node stores
only the split dimension and two scalars \texttt{divLow}, \texttt{divHigh}, the
two edges of the empty slab separating the children; no per-node bounding box is
kept, only the root box. These suffice both to bound the distance to a pruned
subtree and, as Sec.~\ref{sec:manifold} shows, to \emph{reconstruct} the
per-coordinate node regions exact manifold search requires.

\subsection{Query with incremental distance bound}
\label{sec:euclidean-search}

A $k$-NN query is a branch-and-bound descent. At an interior node with split
dimension $c$ it visits the near child first, and the far child only if a lower
bound $B$ on the squared distance from the query $v$ to that child's region is
below the pruning threshold $\tau$ (the $k$-th smallest squared distance so
far). $B$ is maintained \emph{incrementally}: the search keeps each dimension's
contribution to the distance from $v$ to the current region (initialized from
the root box) and $B$ is their running sum, so entering the far child updates
only the split dimension's term, $B \leftarrow B + g_c - \rho_c$, in $O(1)$
rather than recomputing a $D$-dimensional distance. Correctness rests on $g_c$
being an \emph{admissible} lower bound of the distance from $v$ to the far
region along $c$; in Euclidean space it is simply $(v_c-s_c)^2$, the squared gap
to the split value $s_c$. Generalizing this single quantity to non-Euclidean
topologies is the core contribution of this work.

\section{EXACT NN ON PRODUCT MANIFOLDS}
\label{sec:manifold}

The user declares the search-space topology \emph{at compile time} as a product
manifold $M = M_1\times\cdots\times M_b$ of base spaces ($\mathbb{R}^n$,
$SO(2)$, $SO(3)$ as unit quaternions, the sphere $S^n$; products yield $SE(2)$,
$SE(3)$, the $n$-torus, $SO(3)^2$, $\mathbb{R}^3\times S^2$), and we synthesize a
distance functor making the tree of Sec.~\ref{sec:background} honor that metric
while keeping the search \emph{exact}.
The declaration is flattened at compile time into a table mapping each
scalar coordinate to its topology tag, \textsc{Linear}, \textsc{Circular}, or
\textsc{Quaternion}; this table is the only thing the metric and the search
consult, and it costs no run-time storage. Angles are assumed normalized to
$[-\pi,\pi]$. Unit-norm quaternions are assumed wherever a chordal distance is
read back as a rotation angle; the bounds and the proof below never use the
norm, so a uniformly scaled block (Sec.~\ref{sec:exp-app}) is equally valid.

\subsection{Product metric}
We equip $M$ with the weighted squared product metric
$d^2(p,q) = \sum_{j=1}^{b} w_j^2\, d^2_{M_j}(p_j,q_j)$, $w_j>0$,
with per-base contributions
\begin{equation}
d^2_{M_j}(p_j,q_j) =
\begin{cases}
\sum_i (p_i-q_i)^2 & M_j=\mathbb{R}^n,\\[2pt]
\mathrm{wrap}(p-q)^2 & M_j=SO(2),\\[2pt]
\min\!\big(\lVert p-q\rVert^2,\ \lVert p+q\rVert^2\big) & M_j=SO(3),
\end{cases}
\label{eq:metric}
\end{equation}
where $\mathrm{wrap}(\cdot)$ maps an angle difference into $[-\pi,\pi]$. For
$SO(3)$ we use the \emph{chordal} distance on the unit quaternion sphere
$S^3\subset\mathbb{R}^4$. For unit $p,q$ it satisfies the elementary
identity
$\min(\lVert p-q\rVert^2,\lVert p+q\rVert^2)=2-2|\langle p,q\rangle|
=4\sin^2(\theta/4)$ with $\theta=2\arccos|\langle p,q\rangle|$ the geodesic
angle, so it is a strictly increasing function of $\theta$ on $[0,\pi]$ and
induces the same nearest-neighbor ranking; it is the standard quaternion
chordal metric, the quaternion form of the chordal rotation
distance~\cite{hartley2013rotation}, while the $\min$
accounts for the antipodal double cover $q\sim -q$. Being the ambient Euclidean metric of $\mathbb{R}^4$ restricted to $S^3$, it
keeps axis-aligned splits and per-coordinate interval bounds valid: the sphere
constraint is never used in the bound, which only makes it conservative (a true
lower bound), never inadmissible. This is what lets a \emph{linear} KD-tree serve
$SO(3)$ exactly. This is also why we index quaternions rather than rotation matrices: the
matrix chordal distance $\lVert R_1-R_2\rVert_F=2\sqrt{2}\sin(\theta/2)$
\cite{hartley2013rotation} is a fixed monotone function of the quaternion one,
so both induce the \emph{same} ranking, but a matrix tree would spread a
$3$-dimensional manifold over $9$ axes instead of $4$, weakening every
axis-aligned bound. Callers holding matrices convert once, in $O(1)$.

The unit sphere $S^n$ (3D directions for $n=2$) follows the same embedding:
unit vectors with chordal distance $\lVert p-q\rVert^2=4\sin^2(\theta/2)$,
monotone in the geodesic angle $\theta$ and \emph{literally} the ambient
$\mathbb{R}^{n+1}$ metric, so $S^n$ coordinates reduce to \textsc{Linear} ones
with no new bound and, unlike $SO(3)=\mathbb{R}P^3$, no double cover. Only
$S^1=SO(2)$ is handled \emph{intrinsically} (one wrapped angle), since it alone
admits a global separable chart.

\subsection{Exact pruning via an admissible coordinate bound}
\label{sec:bound}
A subtree whose region is the axis-aligned box $R=\prod_i I_i$
($I_i=[\ell_i,h_i]$) is pruned when an incrementally maintained lower bound on
the distance from the query to $R$ exceeds the current $k$-th best distance.
Correctness thus reduces to supplying, per coordinate, an \emph{admissible}
(never over-estimating) lower bound of its distance contribution. We define
\begin{equation}
g_i(v,I_i)=
\begin{cases}
\big(\max(0,\ell_i-v,v-h_i)\big)^2 & \text{linear},\\[2pt]
\big(\max(0,\,\delta(v,I_i))\big)^2 & \text{circular},\\[2pt]
\min\big(b^{+}_i,\,b^{-}_i\big) & \text{quaternion},
\end{cases}
\label{eq:bound}
\end{equation}
where, for a circular coordinate, $\delta(v,I_i)=0$ if $v\in I_i$ and otherwise
the smaller of the two \emph{wrapped} distances from $v$ to the endpoints
$\ell_i,h_i$; and for a quaternion coordinate $b^{+}_i,b^{-}_i$ are the linear
bounds of $v$ and $-v$ against $I_i$. The whole region bound is the running sum
$\sum_i g_i$, updated one coordinate at a time exactly as in the Euclidean case.

\begin{lemma}[Admissibility]
\label{lem:adm}
For every node region $R=\prod_i I_i$ that contains all data points of its
subtree, and every query $v$,
$\sum_i w^2_{j(i)}\, g_i(v,I_i) \le \min_{p\in R\cap \mathrm{Data}} d^2(v,p)$.
Consequently a subtree is pruned only when it cannot contain a closer neighbor,
i.e.\ the search returns the exact nearest neighbors.
\end{lemma}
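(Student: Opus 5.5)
The plan is to prove the inequality pointwise: fix an arbitrary data point $p\in R\cap\mathrm{Data}$ and show $\sum_i w^2_{j(i)}\,g_i(v,I_i)\le d^2(v,p)$. Taking the minimum over $p$ then gives the stated bound. Since $d^2(v,p)=\sum_j w_j^2\, d^2_{M_j}(v_j,p_j)$ and the left-hand side regroups by blocks with the same weights $w_j^2>0$, it suffices to prove a per-block inequality
\[
\textstyle\sum_{i\in \text{block } j} g_i(v,I_i)\le d^2_{M_j}(v_j,p_j).
\]
The only hypothesis used is $p_i\in I_i$ for every coordinate $i$, which is exactly the assumption that the region contains all data points of its subtree. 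This is also why the argument extends to any per-block-monotone combination: each block's bound is dominated by its true contribution, and a monotone combination preserves that ordering.

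Next I verify the three cases of \eqref{eq:bound} in turn.

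\textbf{Linear} ($\mathbb{R}^n$, and $S^n$ via its ambient embedding). Because $p_i\in[\ell_i,h_i]$, we have $|v_i-p_i|\ge\max(0,\ell_i-v_i,v_i-h_i)$. Squaring and summing over the block gives the claim.

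\textbf{Circular} ($SO(2)$). If $v\in I_i$ the bound is $0$ and there is nothing to show. Otherwise, $I_i$ is an arc of the circle that contains $p_i$ but not $v$. The shortest arc from $v$ to $p_i$, of length $|\mathrm{wrap}(v-p_i)|$, starts outside $I_i$ and ends inside it, so it must pass through $\ell_i$ or $h_i$. Its length is therefore at least the smaller wrapped distance from $v$ to an endpoint, which is $\delta(v,I_i)$.

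\textbf{Quaternion}. The linear case gives $\sum_{i} b^{+}_i\le\lVert v-p\rVert^2$ and $\sum_i b^{-}_i\le\lVert v+p\rVert^2$ over the block. Since a sum of coordinatewise minima is at most the minimum of the sums,
\[
\textstyle\sum_i\min(b^+_i,b^-_i)\le\min\big(\sum_i b^+_i,\sum_i b^-_i\big)\le\min(\lVert v-p\rVert^2,\lVert v+p\rVert^2).
\]
The MPNN block-level bound, $\min(\sum b^+,\sum b^-)$, is sandwiched in the same chain, so it is admissible too. No unit-norm assumption enters at any point.

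For the consequence, I argue by induction over the search. The running sum maintained incrementally equals $\sum_i w^2 g_i$ for the current child's intervals, because each update replaces only the split coordinate's term. A subtree is discarded only if this sum is at least $\tau$, where $\tau$ is the current $k$-th best value and never increases. By the lemma, every point in that subtree then has distance at least $\tau$, so it cannot enter the result set, and the output coincides with an exhaustive scan.

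I expect the main obstacle to be the circular case in the presence of the $\pm\pi$ seam. There one must check that the reconstructed interval $I_i$ (possibly a wrapped arc, or all of $[-\pi,\pi]$) really is an arc containing every $p_i$ of the subtree, and that $\delta$ is computed against the correct arc rather than its complement. I would handle this by treating $I_i$ intrinsically as a connected subset of $S^1$ and invoking the region-superset hypothesis of the lemma, deferring the correctness of the reconstruction itself to the description of the search.
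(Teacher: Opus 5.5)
Your proof is correct and follows the paper's argument. You reduce to a per-block inequality using only $p_i\in I_i$, then treat the three cases: the linear one by the point-to-interval bound, the circular one by noting that the shortest arc must cross an endpoint, and the quaternion one by ``sum of minima $\le$ minimum of sums.'' You then reassemble with the weights. Your quaternion chain passes through $\min(\sum_i b^+_i,\sum_i b^-_i)=G_B$, which also certifies the block-level gate in the same line. One minor slip in the consequence: the incrementally maintained sum is in general only $\le\sum_i w^2 g_i(v,I_i)$ for the current region, not equal to it, because the near child inherits its parent's terms unchanged. This still suffices, since each $g_i$ can only decrease as its interval grows, so the running sum remains below $d^2(v,p)$ for every point $p$ in the subtree.
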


\begin{proof}
For linear and circular coordinates the metric \eqref{eq:metric} is separable,
$d^2(v,p)=\sum_i f_i(v_i,p_i)$ with $f_i\ge 0$. The $SO(3)$ block is \emph{not}
separable, since its chordal distance is $\min(\sum_i a_i,\sum_i b_i)$ with
$a_i=(v_i-p_i)^2$, $b_i=(v_i+p_i)^2$; we instead use the separable \emph{lower
bound} $f_i=\min(a_i,b_i)$, which satisfies $\sum_i f_i\le\min(\sum_i a_i,\sum_i
b_i)=d^2_{SO(3)}(v,p)$ because $\sum_i\min(a_i,b_i)\le\min(\sum_i a_i,\sum_i b_i)$.
Pruning by any $g\le d^2$ is admissible, so in all cases it suffices to show
$g_i(v,I_i)\le f_i(v,p_i)$ for every $p_i\in I_i$ and then sum. For a linear coordinate this is
the classical point-to-interval inequality. For a circular coordinate, $I_i$ is a
direct arc that contains $p_i$; the nearest point of a direct arc to an external
query is an endpoint, and using the \emph{wrapped} endpoint distance never
over-estimates, hence $g_i\le f_i$; if $v\in I_i$ then $g_i=0\le f_i$. For a
quaternion coordinate, $\min(b^+_i,b^-_i)$ lower-bounds each of the two covers
coordinate-wise, and summing the minima lower-bounds the block $\min$ as noted
above. As $R\supseteq R\cap\mathrm{Data}$, the bound also holds against the data
in the subtree.
Two generalizations follow at once. (i) \emph{Weights:} multiplying the
$j$-th block inequality by $w_j^2>0$ preserves it; since
$\min(\lVert wp-wq\rVert^2,\lVert wp+wq\rVert^2)=w^2\min(\lVert p-q\rVert^2,
\lVert p+q\rVert^2)$ and $-(wq)=w(-q)$, a weight may equivalently be applied in
the metric or by scaling the block's coordinates. (ii) \emph{Combination rule:}
the proof uses only that the per-block terms are combined by a function
non-decreasing in each argument, so any such rule works, including an $L_1$
(Manhattan) sum of unsquared \emph{per-block} terms ($\sqrt{G_B}$ below for a
quaternion or $S^n$ block, not a sum of coordinate gaps). A strictly increasing
reparameterization of a block bound also stays admissible, e.g.\
$\theta\ge4\arcsin(\sqrt{g_B}/2)$ bounds the geodesic angle, so a geodesic
rotation factor needs no new theory.
\end{proof}

\emph{Remark (block-level tightening).} For a quaternion block $B$ (four
coordinates) the per-coordinate sum is loose, since
$\sum_{i\in B} \min(a_i,b_i)\le \min(\sum_i a_i,\sum_i b_i)$. Because the
region-carrying search (Sec.~\ref{sec:algos}) already holds the block's four
intervals $R_B=\prod_{i\in B}I_i$, the prune test recomputes in $O(1)$ the
\emph{tight} block bound
$G_B=\min\!\big(d^2_{\mathbb{R}^4}(v,R_B),\,d^2_{\mathbb{R}^4}(-v,R_B)\big)$~\cite{yershova2007mpnn}
from them (Fig.~\ref{fig:circle-schematic}(b)), matching MPNN's per-node block bound while storing \emph{no} per-node
box (it is reconstructed, not stored) and leaving the Euclidean update and node
layout unchanged.
How $G_B$ enters matters, as the running sum stays strictly
per-coordinate: $\mathrm{dists}[\,]$ and $B_{\mathrm{lo}}=\sum_i g_i$ are
maintained as in the Euclidean case ($O(1)$ per level), and $G_B$ enters only at
the prune \emph{gate}, a non-negative correction replacing each block's loose
share by its tight one,
\begin{equation}
B_{\mathrm{gate}} \;=\; B_{\mathrm{lo}} \;+\;
  \sum_{B}\Big(\,G_B \;-\; \textstyle\sum_{i\in B} g_i\,\Big) \;\ge\; B_{\mathrm{lo}},
\label{eq:gate}
\end{equation}
visiting the far child iff $B_{\mathrm{gate}}\le\tau$. Each parenthesized term
is $\ge 0$, so \eqref{eq:gate} stays a lower bound and Lemma~\ref{lem:adm}
applies unchanged; the \emph{loose} $B_{\mathrm{lo}}$ is what descends, so no
state is rewritten and each deeper node re-tightens its own gate, at four
interval evaluations per block.
The gain on rotation-heavy spaces is quantified in
Sec.~\ref{sec:exp-manifold}.

\subsection{Algorithms adapted to non-Euclidean groups}
\label{sec:algos}

Three building blocks of Sec.~\ref{sec:background} change: the leaf distance,
the per-coordinate bound, and the descent.

The \emph{leaf distance} walks the topology table and accumulates each
coordinate's contribution from \eqref{eq:metric}, processing a quaternion block
jointly so the antipodal cover is exact rather than coordinate-wise. The
\emph{bound} $g_i$ implements \eqref{eq:bound} per topology
(\texttt{pointToIntervalDist} in Algorithm~\ref{alg8}).

The \emph{descent} needs one change specific to the circle: one cut value does
not suffice for \eqref{eq:bound}, since a far child wrapping across $\pm\pi$ can
be closer than its near edge suggests, so the Euclidean single-edge bound
over-prunes (Fig.~\ref{fig:circle-schematic}). The adapted search carries the
running per-coordinate region down the recursion, splitting it at the node's
slab, and evaluates $g_i$ against the \emph{full} child interval; near/far
choice and far bound both use $g_i$, so the descent is uniform across
topologies. For linear coordinates the
full-interval bound equals the Euclidean single-edge value, so the only added
cost is an $O(1)$ save/restore per level, incurred solely by manifold metrics.
Since the root box is a real interval and every split yields real
sub-intervals, each carried interval is a \emph{non-wrapping} arc, licensing the
containment test in \eqref{eq:bound}. MPNN \emph{stores} the box per node; we
\emph{reconstruct} it from the split values, and the same region supplies each
quaternion block's four intervals for the tight bound of Sec.~\ref{sec:bound}.

\begin{algorithm}[caption={manifold search, region-carrying (searchLevelManifold)}, label={alg8}, float]
 function: mfSearch()
 input: node, query v, mindist, dists[], region[]
 begin
   if (node is leaf)
     foreach p in node: addResult(p, dist(v,p))
     return
   end
   c $\gets$ node.divDim ; val $\gets$ v[c] ; cur $\gets$ region[c]
   # Child intervals, rebuilt from the split scalars:
   lo $\gets$ [cur.lo, divLow] ; hi $\gets$ [divHigh, cur.hi]
   dLo $\gets$ pointToIntervalDist(val, lo, topo[c])
   dHi $\gets$ pointToIntervalDist(val, hi, topo[c])
   if (dLo<=dHi) near,far,nR,fR $\gets$ left,right,lo,hi
   else          near,far,nR,fR $\gets$ right,left,hi,lo
   end
   region[c] $\gets$ nR
   mfSearch(near, v, mindist, dists, region)
   cutd $\gets$ pointToIntervalDist(val, fR, topo[c])
   mindist $\gets$ mindist + cutd - dists[c]
   save $\gets$ dists[c]
   dists[c] $\gets$ cutd ; region[c] $\gets$ fR
   # Gate only (Eq. 3): tighten the quaternion blocks.
   gate $\gets$ mindist
   foreach quaternion block B:
     gate $+{=}$ blockBound(v,region,B) $-$ $\sum_{i \in B}$dists[i]
   if (gate <= worstDist)
     mfSearch(far, v, mindist, dists, region)
   dists[c] $\gets$ save ;  region[c] $\gets$ cur
 end
\end{algorithm}

\begin{figure}[t]
\centering
\begin{subfigure}[b]{\columnwidth}
\centering
\import{figs}{circle_schematic.pgf}
\caption{circle $SO(2)$}
\label{fig:sch-so2}
\end{subfigure}

\vspace{2pt}
\begin{subfigure}[b]{\columnwidth}
\centering
\begin{tikzpicture}[scale=0.78,every node/.style={font=\scriptsize},
                    >={Stealth[length=3pt]}]
  \draw[black!65] (0,0) circle (1.25);
  \node at (137:1.52) {$S^3$};
  \fill (0,0) circle (0.7pt);
  \node[inner sep=1.5pt,right=0pt] at (0,0) {$0$};
  \draw[dotted,black!55] (0.717,1.024) -- (-0.717,-1.024);
  \fill (0.717,1.024) circle (1.5pt);
  \node[above right,inner sep=1pt] at (0.717,1.024) {$v$};
  \fill (-0.717,-1.024) circle (1.5pt);
  \node[below right=-1pt,inner sep=2pt] at (-0.717,-1.024) {$-v$};
  \draw[fill=black!7,draw=black!55] (-1.42,-0.62) rectangle (-0.91,-0.17);
  \draw[line width=1.1pt,black!65] (187.9:1.25) arc (187.9:209.7:1.25);
  \node[left,inner sep=2pt] at (-1.42,-0.40) {$R_B$};
  \draw[->,black!50] (0.717,1.024)
       -- node[sloped,above,pos=0.46,inner sep=1.5pt,text=black] {$d(v,R_B)$}
       (-0.91,-0.17);
  \draw[->,thick] (-0.717,-1.024) -- node[left=1pt,pos=0.45] {$G_B$} (-0.91,-0.62);
\end{tikzpicture}
\caption{quaternion block $B\subset S^3$}
\label{fig:sch-quat}
\end{subfigure}
\caption{Why a Euclidean KD-tree mis-prunes on non-Euclidean spaces.
(a) On $SO(2)$ the query and its true nearest neighbor lie a
short geodesic arc apart across the $\pm\pi$ seam but at opposite ends of the
split coordinate, so the Euclidean point-to-cut bound prunes the far child,
whereas the wrap-aware bound \eqref{eq:bound} measures the wrapped distance and
explores it. (b) A quaternion block, on a cross-section of
$S^3$: the node box $R_B$ covers the arc its subtree occupies and is far from
$v$ but close to the antipode $-v$, the \emph{same} rotation, so pruning on
$d(v,R_B)$ alone would discard it. The bound is the smaller of the two ambient
point-to-box distances, $G_B=\min(d^2(v,R_B),d^2(-v,R_B))$, reflecting the
\emph{query}, never the box; it is
admissible because every $p\in R_B$ has $d^2(v,p)\ge d^2(v,R_B)$ and
$d^2(-v,p)\ge d^2(-v,R_B)$. Products follow blockwise, and for $S^n$ ($n\ge2$), its chordal metric \emph{is} the ambient one.}
\label{fig:circle-schematic}
\end{figure}

\section{INCREMENTAL SELF-BALANCING INDEX}
\label{sec:incremental}

Real-time LiDAR mapping needs a KD-tree over a point set that changes every
frame: insert the new scan, delete what leaves the window, query heavily in
between. Rebuilding the static tree each frame scales with the whole map, and
the Bentley--Saxe forest~\cite{bentley_decomposable_1980} multiplies query cost
by the number of sub-trees and never reclaims deleted points. Following the
design pioneered by the ikd-Tree~\cite{cai2021ikdtree}, the incremental index
keeps a \emph{single} self-balancing tree with one point per node, augmented
with subtree size, tombstone count and a subtree AABB. Inserts attach in
amortized $O(\log N)$; deletes are lazy tombstones; the AABB lets
\texttt{removeOutsideBox} drop whole subtrees in $O(1)$ as the window slides.
Balance comes from weight-balanced (BB[$\alpha$]) \emph{partial rebuilds},
triggered on the insertion unwind or when a subtree's tombstone fraction exceeds
a threshold; rebuilds reclaim tombstoned nodes through a pooled allocator with a
free-list, bounding steady-state memory under churn. The core is synchronous; an optional wrapper offloads large near-root
rebuilds to a background thread, bounding the update-latency tail at the price
of temporarily less-balanced queries and roughly twice the memory (as in
ikd-Tree).
Exactness under asynchrony needs no locking on the query path, since the
background thread never mutates the queried tree: a rebuild snapshots the live
indices and builds a \emph{separate} tree off-thread, while each foreground
insertion or deletion is applied to the current tree \emph{and} appended to an
operation log, which the foreground replays onto the fresh tree before swapping
it in. Queries thus always see one complete index whose subtree AABBs are
supersets of their contents, the sole precondition of Lemma~\ref{lem:adm}; a
partially built tree is unreachable, so exactness holds for every interleaving,
not merely the tested ones, at the cost of two trees during a rebuild.

The bound of Sec.~\ref{sec:bound} carries over at no extra cost: each
per-subtree AABB already maintained is exactly the raw-coordinate region $R$
Lemma~\ref{lem:adm} requires. Evaluating \eqref{eq:bound} against each child's
box, in place of the Euclidean cut distance, makes the incremental index exact
on the product manifolds of Sec.~\ref{sec:manifold} too, under churn and
background rebuilds, enabling growing $SE(3)$ keyframe databases (loop-closure
retrieval, RRT$^\ast$).

\section{EXPERIMENTAL RESULTS}

The evaluation covers Euclidean performance (Sec.~\ref{sec:exp-euclid});
exactness where the Euclidean shortcut fails (Sec.~\ref{sec:exp-manifold});
comparison with prior exact-manifold
libraries~\cite{yershova2007mpnn,ichnowski2020concurrent}
(Sec.~\ref{sec:exp-headtohead}); loop-closure retrieval on real $SE(3)$
trajectories (Sec.~\ref{sec:exp-app}); and updates under sliding-window churn
(Sec.~\ref{sec:exp-incremental}).

\emph{Setup.} AMD Ryzen 7 2700X (8C/16T, 3.7\,GHz), 48\,GB RAM, Ubuntu
24.04, GCC 13.3, \texttt{-O3 -march=native}. Timings are single-threaded except
where a method's background thread is named (thread counts are then stated);
every index uses maximum point count per leaf node of $10$; each time is a mean over $2000$
queries after warm-up, ${<}3\%$ spread over $5$ repetitions. Data:
KITTI~\cite{geiger2012kitti}, MulRan~\cite{kim2020mulran}; baselines
(pico\_tree~\cite{picotree}, FLANN, libnabo, PCL, HNSW, MPNN, nigh, ikd-Tree)
pinned to the commits listed in the benchmark repository,
\url{https://github.com/MRPT/nanoflann-benchmark}.

\subsection{Euclidean baseline and library comparison}
\label{sec:exp-euclid}
We first confirm the library is a competitive, drop-in Euclidean KD-tree,
benchmarking build and exact $1$-NN query on KITTI~\cite{geiger2012kitti}
Velodyne scans against seven widely used C++ implementations
(FLANN~\cite{muja2009flann} plain and randomized, PCL~\cite{rusu2011pcl},
libnabo~\cite{elseberg2012libnabo}, pico\_tree~\cite{picotree}, libkdtree++,
brute force).

On 3D LiDAR data the index builds and queries on par with the fastest across
four orders of magnitude of $N$: at
$N=2\times 10^6$ it answers exact queries in $1.17\,\mu$s (pico\_tree $1.08$,
FLANN $1.18$) and builds in $0.22$\,s (pico\_tree $0.19$\,s) versus $3.6$\,ms
per brute-force query, while header-only and dependency-free. pico\_tree's small
build edge reflects its lighter box-midpoint split versus the proposed
spread-based axis choice, a build-time-versus-split-tightness trade-off. The
high-dimensional regime inverts this: at $d=128$ every exact KD-tree, ours
included ($1.9$\,ms/query), approaches brute force ($7.3$\,ms) and only
approximate methods stay fast ($46$--$169\,\mu$s at recall $\approx 0.99$), so
exact KD-trees suit low-dimensional state spaces, not descriptor matching.

\subsection{Exact nearest neighbors on non-Euclidean spaces}
\label{sec:exp-manifold}

We validate the index on $SO(2)$, $S^2$, $SO(3)$, $SE(3)$ and $SO(3)^2$, with
uniformly random data and queries (angles, unit vectors/quaternions,
translations in a $\pm 10$ box). Quaternions are
drawn uniformly on $S^3$ and stored with \emph{random} (uncanonicalized) signs,
the worst case for the double cover. The ground truth is an exhaustive search using the
same product metric \eqref{eq:metric}, whose rotational factor is chordal
(Sec.~\ref{sec:manifold}); for the pure-rotation spaces this ordering coincides
with the geodesic one; $k=1$ throughout.

\emph{Correctness.} For every space and size up to $N=10^6$ the index returns
distances identical to brute force (recall $=1.0$, machine precision),
confirming Lemma~\ref{lem:adm}. Feeding the same raw coordinates to a Euclidean
KD-tree (the common workaround) is \emph{not} exact: recall collapses as
$\approx 0.5^{m}$ in the number $m$ of quaternion blocks ($0.50$ for $SO(3)$,
$0.49$ for $SE(3)$, $0.26$ for $SO(3)^2$), the signature of the double cover
$q\sim-q$ that \eqref{eq:metric} handles and a Euclidean metric ignores.
For $SO(2)$, naive recall rounds to $1.0$ only because at $N=10^6$ few
queries have their true neighbor across the $\pm\pi$ seam. Restricting queries to a band around $\pm\pi$ of ten times the mean
nearest-neighbor gap exposes it: the naive tree drops to recall $0.97$, $0.99$
and $0.88$ at $N{=}10^3,10^5,10^6$, while ours stays at $1.00$. The failure is
not rare but \emph{localized}, which is why it survives uniform-query averages
yet concentrates where revisits happen (Sec.~\ref{sec:exp-app}).

\emph{Sign convention.} The naive baseline is strongly sign-dependent: on
$SO(3)$ at $N{=}10^5$ its recall is $0.49$ with random signs but $0.99$ with
canonicalized ($q_w\ge0$) ones, canonicalization moving the seam to $q_w{=}0$
rather than removing it. Ours is insensitive in correctness and cost: recall
$1.00$ and $16.9$/$17.9$/$17.6\,\mu$s under random, canonicalized and
sign-continuous conventions (${<}6\%$ spread).

\begin{table}[t]
\centering
\caption{Manifold NN at $N=10^6$, $k=1$. Recall is against the exhaustive
ground truth under the same metric \eqref{eq:metric}.
$t_{\mathrm{KD}}$/$t_{\mathrm{BF}}$: mean query times for the manifold KD-tree
and brute force.}
\label{tab:manifold}
\renewcommand{\arraystretch}{1.02}
\begin{tabular}{l|cc|ccc}
\hline
Space & \multicolumn{2}{c|}{recall} & $t_{\mathrm{KD}}$ & $t_{\mathrm{BF}}$ & speedup\\
 & ours & naive & [$\mu$s] & [$\mu$s] & ($\times$)\\
\hline
$SO(2)$    & \textbf{1.00} & 1.00 & 3.3   & 4573 & $1399$\\
$S^2$      & \textbf{1.00} & 1.00 & 6.0   & 2936 & $493$\\
$SO(3)$    & \textbf{1.00} & 0.50 & 139   & 3826 & $27.6$\\
$SE(3)$    & \textbf{1.00} & 0.49 & 96    & 5252 & $54.7$\\
$SO(3)^2$  & \textbf{1.00} & 0.26 & 747   & 6505 & $8.7$\\
\hline
\end{tabular}
\end{table}

\emph{Efficiency.} Table~\ref{tab:manifold} reports the $N=10^6$ point; the
ordering is stable across $N\in[10^3,10^6]$. On the spaces dominating robotics
practice the tree is far faster than exhaustive search, up to $\sim\!10^3\times$
for $SO(2)$ and $\sim\!55\times$ for $SE(3)$, where the translation block prunes
strongly. Pure rotation is hardest (the double cover doubles the branching), but
the block-level bound keeps it clear of brute force: $\sim\!28\times$ for
$SO(3)$, $\sim\!9\times$ for $SO(3)^2$,
where the looser per-coordinate bound was near or below parity.
This is the false-positive cost of the separable form: on rotation-heavy
spaces it inflates query time ${\sim}30\times$ ($SO(3)$ at $N{=}10^5$:
$450\,\mu$s vs $15\,\mu$s with \eqref{eq:gate}). Without a quaternion block the
separable bound already gives the speedups above, and the correction is compiled
out. The
unit sphere $S^2$ prunes like a $3$D cloud ($\sim\!500\times$): its chordal
metric \emph{is} ambient $\mathbb{R}^3$ distance, so the naive shortcut is exact
there too.

\subsection{Comparison with prior exact-manifold libraries}
\label{sec:exp-headtohead}
The relevant baselines are the existing exact-manifold KD-tree libraries. We
compare against MPNN~\cite{yershova2007mpnn} and
nigh~\cite{ichnowski2015se3,ichnowski2020concurrent}, the maintained successor
of the Ichnowski-Alterovitz $SE(3)$ tree, on $SO(3)$ and $SE(3)$ for
$N\in[10^3,3\times10^5]$, $k=1$ (Fig.~\ref{fig:manifold-libs},
Table~\ref{tab:headtohead}).

\emph{$SO(3)$ is metric-neutral:} geodesic angle, chordal distance and the
quaternion dot product are all strictly monotone in the rotation angle, so all
three libraries induce the same ordering and cross-validate against one geodesic
ground truth; every method attains recall $1.0$ at every $N$. nigh's intrinsic
partition of $S^3$ is fastest on pure rotation ($2$--$3\,\mu$s, near flat). With
the block-level bound the present tree prunes in the same regime: ${\sim}3\times$
faster than MPNN ($15$ vs $45\,\mu$s at $N{=}10^5$) and within $6$--$9\times$ of
nigh ($29$ vs $3.3\,\mu$s at $N{=}3\times10^5$), where the per-coordinate bound
trailed by over an order of magnitude ($\sim\!450\,\mu$s).

\emph{On $SE(3)$ the two libraries do not optimize the same objective.}
Ours minimizes the squared chordal-product metric of \eqref{eq:metric},
$\lVert\Delta t\rVert^2 + w_R^2 d^2_{\mathrm{chordal}}$; nigh's
\texttt{SE3Space} is a Cartesian space whose distance is the \emph{unsquared
sum} of its factors, $\lVert\Delta t\rVert + \theta$, with $\theta$ the geodesic
rotation angle. Neither is a monotone function of the other: on uniform data the
two select the same neighbor in only $0.53$--$0.58$ of queries
($N{=}10^3$--$10^5$). Each is exact for its own metric (recall $1.00$ against a
per-metric ground truth, Table~\ref{tab:headtohead}), so this row compares two
exact solvers of two \emph{different} problems on an identical workload, and we
draw no ranking conclusion from it. With that caveat, ours answers those queries
in $4.2\,\mu$s vs nigh's $9.5\,\mu$s at $N{=}10^5$ ($16.1$ vs $17.1\,\mu$s at
$3\times10^5$) and builds ${\sim}2\times$ faster.

\begin{figure*}[t]
\centering
\import{figs}{manifold_libs.pgf}
\caption{Head-to-head vs.\ prior exact-manifold NN libraries (single thread,
$k=1$). All methods are exact for their own metric. On pure $SO(3)$,
where all three share one ground truth, nigh's intrinsic bound is
fastest, but the proposed block-level bound now beats MPNN and stays within a
small factor of nigh. The $SE(3)$ panel compares two exact solvers of
\emph{different} product metrics (see Table~\ref{tab:headtohead}) on the same
workload, so it reports cost, not ranking quality.}
\label{fig:manifold-libs}
\end{figure*}

\begin{table}[t]
\centering
\caption{Exact-manifold libraries at $N=10^5$, $k=1$ (single thread). Build is
index construction; $t_q$ is mean query time. On $SO(3)$ all three
libraries induce the same ordering, so recall is against a single \emph{shared}
geodesic ground truth. On $SE(3)$ they optimize different metrics (ours
$\lVert\Delta t\rVert^2+w_R^2 d^2_{\mathrm{chordal}}$; nigh
$\lVert\Delta t\rVert+\theta$), so recall is against each method's \emph{own}
exhaustive ground truth and the timings are not a like-for-like ranking
comparison; the two metrics agree on the NN in only ${\sim}0.55$ of queries.}
\label{tab:headtohead}
\renewcommand{\arraystretch}{1.02}
\begin{tabular}{l|ccc|ccc}
\hline
 & \multicolumn{3}{c|}{$SO(3)$ (shared GT)} & \multicolumn{3}{c}{$SE(3)$ (own GT)}\\
Library & build & $t_q$ & recall & build & $t_q$ & recall\\
 & [ms] & [$\mu$s] & & [ms] & [$\mu$s] & \\
\hline
proposed   & \textbf{14}  & 15   & 1.00 & \textbf{16} & \textbf{4.2} & 1.00\\
nigh       & 35           & \textbf{2.4} & 1.00 & 35 & 9.5 & 1.00\\
MPNN       & 195          & 45   & 1.00 & -- & -- & --\\
\hline
\end{tabular}
\end{table}

\subsection{Application: loop-closure candidate retrieval}
\label{sec:exp-app}
We next evaluate a task where $SE(3)$ NN is critical: loop-closure candidate
retrieval over SLAM keyframes, using ground-truth trajectories with revisits
from KITTI~\cite{geiger2012kitti} (00, 02, 05, 06, 08) and
MulRan~\cite{kim2020mulran} (KAIST01, DCC01, Riverside01; decimated to a
keyframe every $1$\,m). For each keyframe we retrieve the $k=8$ nearest outside a $100$-frame temporal
exclusion window, under $d^2=\lVert\Delta t\rVert^2+w_R^2 d^2_{\mathrm{chordal}}(q)$. The
rotation weight $w_R$ (meters per unit chordal distance) favors same-heading
candidates and is applied by scaling the quaternion coordinates. This is exactly the weighted metric of \eqref{eq:metric}: scaling a
quaternion block by $w_R$ multiplies its chordal contribution by $w_R^2$ and
commutes with the antipodal map, so Lemma~\ref{lem:adm} applies unchanged
(generalization (i) of its proof). The scaled block leaves the unit sphere,
which is immaterial: unit norm is needed only to read a chordal value back as an
angle. Ground truth is an exhaustive scan with the same metric and exclusion. We
compare against two workarounds common in practice: (i) a \emph{naive} Euclidean
KD-tree on the raw 7-D coordinates, with quaternions either canonicalized
($q_w\ge0$, as published) or sign-continuous (as produced by integrating
odometry); and (ii) a 3-D translation-only tree fetching $32$ candidates and
re-ranking by the full pose metric.

\begin{table}[t]
\centering
\caption{Loop-closure retrieval on KITTI and MulRan keyframes ($w_R{=}15$,
$k{=}8$, gap $100$ frames). The proposed index is exact in \emph{all} cells;
columns give the recall of the naive 7-D Euclidean tree (canonicalized /
sign-continuous quaternions) and of the 3-D-tree + re-rank workaround.}
\label{tab:loopclosure}
\renewcommand{\arraystretch}{0.98}
\scriptsize
\begin{tabular}{l r | r r | c c c}
\hline
seq & $N$ & $t_{\mathrm{BF}}$ & $t_{\mathrm{KD}}$ & naive & naive & re-rank\\
 & & [$\mu$s] & [$\mu$s] & (canon.) & (sign-c.) & \\
\hline
KITTI 00 & 4541 & 19.3 & 2.7 & 0.964 & 0.808 & 0.989\\
KITTI 02 & 4661 & 19.8 & 3.5 & 0.982 & 0.867 & 0.994\\
KITTI 05 & 2761 & 11.7 & 2.7 & 0.981 & 0.853 & 0.984\\
KITTI 06 & 1101 &  5.4 & 1.6 & 0.988 & 0.486 & 0.999\\
KITTI 08 & 4071 & 18.6 & 3.6 & 0.920 & 0.893 & 0.972\\
\hline
KAIST01      & 5877 & 22.0 & 2.3 & 0.959 & 0.698 & 0.993\\
DCC01        & 4684 & 17.8 & 1.6 & 0.981 & 0.443 & 0.995\\
Riverside01  & 6024 & 26.1 & 1.1 & 0.999 & 0.968 & 1.000\\
\hline
\end{tabular}
\end{table}

The manifold index returns the exact candidate set on every sequence, weight and
$k$, at $1.1$--$3.6\,\mu$s per query, $3$--$25\times$ faster than exhaustive scan
and only marginally slower than the (incorrect) naive tree: \emph{exactness costs
essentially nothing}. Both workarounds lose candidates. The naive tree falls to
$0.44$ (sign-continuous) and $0.92$--$0.99$ (canonicalized; canonicalization
merely moves the seam), with errors concentrated on revisited streets ($25\%$ of
KITTI 00 and $65\%$ of DCC01 queries get a wrong candidate set,
Fig.~\ref{fig:loopclosure}). The translation-only re-rank has no exactness
guarantee and degrades as orientation matters ($0.71$ recall at $w_R{=}50$ on
KITTI 08).

\begin{figure}[t]
\centering
\import{figs}{loopclosure_missmap.pgf}
\caption{Loop-closure retrieval, KITTI 00 top-down: markers are keyframes whose
candidate set from the naive (sign-continuous) tree is wrong ($w_R{=}15$). The
failures concentrate on the revisited streets, i.e.\ exactly where loop closures
are found. Only the proposed index is exact, at every $w_R$
(Table~\ref{tab:loopclosure}).}
\label{fig:loopclosure}
\end{figure}

\subsection{Incremental index vs.\ ikd-Tree}
\label{sec:exp-incremental}
Finally we evaluate the incremental index on a sliding-window LiDAR-mapping
workload: KITTI scans accumulate into a world map trimmed each frame to a cube
around the sensor (\texttt{removeOutsideBox}), interleaved with $k$-NN query
batches, measuring update and query latency and stored vs.\ live points against
per-frame static rebuild, the Bentley--Saxe forest and the original
ikd-Tree~\cite{cai2021ikdtree}.
Parameters: $80$ frames of KITTI seq-00, a $120$\,m keep cube advancing
$1.5$\,m/frame, $300$ queries/frame at $k{=}5$,
$(\alpha_{\mathrm{bal}},\alpha_{\mathrm{del}})=(0.85,0.5)$. ikd-Tree spawns its
rebuild thread unconditionally in its constructor (not switchable off), so it is
measured \emph{with} parallel rebuilding, using the FAST-LIO2 defaults
$(\delta_{\mathrm{del}},\alpha,\ell_{\mathrm{box}})=(0.5,0.7,0.2)$. The proposed
\emph{synchronous} index thus uses strictly fewer threads than its baseline
(one, no helper); the asynchronous variant uses two.
Fig.~\ref{fig:incremental} reports results on a $4.8$\,M-point persistent map
($\sim$$122$\,k points/scan). The synchronous tree achieves the best median update among bounded-memory exact
methods ($93$\,ms; ikd-Tree $261$\,ms, per-frame rebuild $605$\,ms) at
single-tree query speed ($7.3\,\mu$s vs.\ the forest's $17.5\,\mu$s); its
weakness is the update tail (p95 $830$\,ms, the near-root rebuild), closed by
the asynchronous variant (p95 $110$\,ms), which beats ikd-Tree on median, tail
and query at once. Memory stays bounded ($1.3\times$ live points, $241$\,B per live point vs
ikd-Tree's $380$), whereas the forest grows with \emph{total} inserts
($2\times$ and rising).

\emph{Ablation.} Disabling one element at a time attributes this cleanly.
Box-wise deletion is what makes the update fast: without it, trimming must test
every live point and the median rises from $93$\,ms to $12.0$\,s ($129\times$).
Partial rebuilds instead \emph{cost} update latency and buy the rest: without
them the median is unchanged ($95$\,ms) and the tail even shortens (p95
$125$\,ms), but queries slow $2.4\times$ ($5.0$ vs $2.1$\,ms) and memory grows
to $295$\,B/point. The pooled free-list is latency-neutral ($94$\,ms) but worth
$4.2\times$ in footprint ($1005$ vs $241$\,B/point). The asynchronous variant
supplies essentially all the tail gain ($7.5\times$) at $1.9\times$ memory.
Since ikd-Tree also trims box-wise, its remaining $2.8\times$ median gap is a
constant factor, not an algorithmic one: the ${\sim}122$\,k insertions per frame
dominate, and we attribute it mostly to smaller, pool-allocated nodes ($241$ vs
$380$\,B/point, hence fewer cache misses per descent) and a lock-free
synchronous update path, though these were not ablated individually across the
two code bases.

\emph{Incremental \emph{and} manifold.} The two compose, as in RRT/RRT$^*$ over
a configuration tree growing by thousands of poses. On an $SE(2)$ RRT$^*$ run
($2\times10^4$ nodes, all structures grown over one identical node sequence) the
index answers every nearest and near-set query and insertion in $0.19$\,s
against $20.3$\,s for rebuilding a static tree each step ($\sim\!108\times$),
both exact, while a Euclidean tree on raw $(x,y,\theta)$ returns a non-nearest
node for $5.7\%$ of queries. nigh, which also inserts incrementally
here, finishes in $0.11$\,s ($1.7\times$ faster than ours) under its own
Cartesian metric $\lVert\Delta t\rVert+|\Delta\theta|$, which agrees with
\eqref{eq:metric} on only $0.83$ of these queries; each is exact for its own
metric. On this insert-only workload a planner-specialized structure is faster;
our deletion and box trimming under churn are never exercised here.

\begin{figure}[t]
\centering
\import{figs}{incremental_kitti.pgf}
\caption{Incremental index vs.\ alternatives on the sliding-window KITTI
workload ($4.8$\,M-point map, $k{=}5$; single thread except the async variant's
background rebuild and ikd-Tree's, which is always on). Top: CDF of
per-frame update latency (steady state). Bottom: physically stored points; the
synchronous tree stays within $1.3\times$ the live set, while the Bentley--Saxe
forest never reclaims trimmed points.}
\label{fig:incremental}
\end{figure}

\section{CONCLUSIONS}
\label{sec:apps}

We presented a header-only, dependency-free KD-tree library performing
\emph{exact} NN search on user-defined product manifolds declared at compile
time. The base library is already a de-facto building block in open-source robotics
(e.g.\ SLAM~Toolbox~\cite{Macenski2021}); the extension lets the same deployed
tree serve queries on a group: pose/orientation retrieval, orientation-aware
place recognition, and nearest-configuration
planning~\cite{lavalle2001rrt,karaman2011rrtstar}. Exactness is w.r.t.\ the product metric \eqref{eq:metric}, whose rotational
factor is chordal; for a single rotation or sphere factor that ordering is the
geodesic one. The key enabler is an admissible point-to-interval lower bound
(Lemma~\ref{lem:adm}), wrap-aware for circular coordinates and
double-cover-aware for quaternion blocks (tightened by the block-level chordal
bound of~\cite{yershova2007mpnn}, applied here at no extra storage), so
pruning stays valid on $SO(2)$, $SO(3)$, $S^n$, $SE(2)$, $SE(3)$, tori and their
products, in both single-tree indices.
Experiments confirm exactness where the common Euclidean workaround is provably
wrong, large speedups on the spaces most relevant to robotics, and that on real
trajectories the workarounds lose precisely the loop closures the queries exist
to find. Unlike prior
planning-specific trees~\cite{yershova2007mpnn,ichnowski2015se3}, the
abstraction is compile-time composable, admits base spaces they do not
($S^n$, hence $\mathbb{R}^3\times S^2$), and adds no per-node storage. The release
also adds an incremental self-balancing index for sliding-window LiDAR maps that
beats the ikd-Tree on update latency at single-tree query speed and bounded
memory, accepting the same manifold metrics under churn. 

\section*{ACKNOWLEDGMENT}
This work was partly supported by the Google Open Source Programs Office (OSPO)
by means of a GSOC2017 grant to Pranjal Rai.

\bibliographystyle{IEEEtran}
\bibliography{references}

\end{document}